\documentclass[11pt, reqno]{amsart}
\usepackage[T1]{fontenc}
\usepackage{graphicx}
\usepackage{xcolor}
\usepackage{booktabs}
\usepackage[misc]{ifsym}
\usepackage{amssymb}
 \usepackage[all]{xy}
\oddsidemargin .2in
\evensidemargin .2in
\textwidth 6in

\usepackage[misc]{ifsym}
\usepackage{ifdraft}
\usepackage{amsmath,bm,xfrac,mathabx,mathtools,stackengine}
\usepackage{algorithm}
\usepackage{algpseudocode}
\usepackage[colorlinks=true, pdfstartview=FitV, linkcolor=blue, citecolor=blue, urlcolor=blue]{hyperref}
\usepackage[justification=centering]{caption}
\usepackage{comment}

\usepackage{xr}
\externaldocument{appendix}

\setcounter{secnumdepth}{3}

\usepackage{mwe}

\newcommand{\delete}[1]{}
\newcommand{\on}[1]{\operatorname{#1}}

\title[Provably Accurate Adaptive Sampling for Collocation Points in PINNs]{Provably Accurate Adaptive Sampling for Collocation Points in Physics-informed Neural Networks}
\author[A. CARADOT]{Antoine CARADOT$^1$}
\author[R. EMONET]{R\'emi Emonet$^1$}
\author[A. HABRARD]{Amaury Habrard$^1$}
\author[A.R. Mezidi]{Abdel-Rahim Mezidi$^1$}
\author[M. SEBBAN]{Marc Sebban$^1$}

\address{$^1$Hubert Curien Laboratory, Jean Monnet University, Saint-\'Etienne, France}
\date{\today}     

\keywords{PINN ; Collocation points ; Adaptive sampling ; Quadrature method.}
\thanks{2020 {\it Mathematics Subject Classification:}
Primary 68T07, 41A55; Secondary 62D05, 35A25}

\theoremstyle{plain}
\newtheorem{theorem}{Theorem}[section]
\newtheorem{lemma}{Lemma}[section]
\newtheorem{proposition}{Proposition}[section]

\theoremstyle{remark}

\begin{document}

\begin{abstract}
Despite considerable scientific advances in numerical simulation, efficiently solving PDEs remains a complex and often expensive problem. Physics-informed Neural Networks (PINN) have emerged as an efficient way to learn surrogate solvers by embedding the PDE in the loss function and minimizing its residuals using automatic differentiation at so-called collocation points. Originally uniformly sampled, the choice of the latter has been the subject of recent advances leading to adaptive sampling refinements for PINNs. In this paper, leveraging a new quadrature method for approximating definite integrals, we introduce a provably accurate sampling method for collocation points based on the Hessian of the PDE residuals. Comparative experiments conducted on a set of 1D and 2D PDEs demonstrate the benefits of our method.
\end{abstract}

\maketitle              

\delete{
The resolution of systems of partial differential equations often requires the involvement of numerical methods, as analytical solutions are rarely obtainable in practice. In recent years, the use of neural networks in the search for solutions has been successfully implemented. This approach consists in embedding the constraints of the differential system within the loss function of the network. During the training phase, the network is evaluated on a set of sampled collocation points, and the resulting loss function is used to update the parameters of the network. The loss function then decreases, and the output of the network  becomes closer to satisfying the differential system. In this paper, we present an adaptative sampling method for the collocation points based on the variations of the loss function. This approach allows for the number of sampled points to remain the same throughout the training, while the distribution of the points in the domain is tied to the derivative of the loss function, hence the name of derived adaptative sampling. We will first present a theoretical motivation for this sampling procedure, and then, through several examples, compare it with other classical sampling methods such as the random uniform sampling and RAR.
}

\section{Introduction}

Incorporating domain knowledge into machine learning algorithms has become a widespread strategy for managing ill-posed problems, data scarcity and solution consistency. Indeed, ignoring the fundamental principles of the underlying theory may lead to, yet optimal, implausible solutions yielding poor generalization and predictions with a high level of uncertainty. Embedding domain knowledge has been shown to be useful  when used at different levels of the learning process for (i) constraining/regularizing the optimization problem, (ii) designing suitable theory-guided loss functions, (iii) initializing  models with meaningful parameters, (iv) designing consistent neural network  architectures, or (v) building (theory/data)-driven hybrid models. In this context, physics is probably  the scientific domain that has benefited the most during the past years from  advances in the so-called {\it Physics-informed Machine Learning} (PiML) field \cite{karniadakis_physics-informed_2021} by leveraging physical laws, typically in the form of Partial Differential Equations (PDEs) that govern some underlying dynamical system. This new line of research led to a novel generation of deep-learning architectures, including Neural ODE \cite{chen2019neural}, PINN \cite{raissi2019physics}, FNO \cite{LiKALBSA21}, PINO \cite{li2023physicsinformed},  PDE-Net \cite{long2018pdenet}, etc.\\

In this paper, we specifically focus on Physics-informed Neural Networks (PINNs) that have received much attention from the PiML community and can be used for both forward as well as inverse problems for differential equations. Despite important scientific advances in numerical simulation, solving efficiently PDEs remains complex and often prohibitively costly. By embedding the physical knowledge into the loss function, PINNs appeared as a natural way for learning efficient neural PDE solvers by minimizing the residuals at collocation points typically randomly sampled from the spatio-temporal domain. Despite indisputable progress, PINNs are still  at an early stage and it has become crucial to study their theoretical foundations and algorithmic properties to gain a comprehensive grasp of their capabilities and limitation. Indeed, different studies have shown that PINNs may be subject to pathological behaviors, leading to trivial solutions with 0 residuals, thus plausible w.r.t. the physical law, while converging to an incorrect solution \cite{Chandrajit2021,doumeche2023convergence}. Characterizing these “failure modes” \cite{wang2020pinnsfailtrainneural} has led to an  active area of research addressing this task from two main perspectives: a first line of investigation that aims at building theoretical foundations when learning PINNs from a uniform sampling of   collocation points (e.g., equispaced uniform grid or uniformly random sampling), resulting in consistency and convergence guarantees in the form of estimation/approximation/optimization  bounds  (see, e.g., \cite{deryck2023error,deryck2022generic,doumeche2023convergence,girault:hal-04518335,lanthaler2022}); a second one with the objective of enhancing PINN performance through the lens of the collocation point sampling. Rather than drawing them uniformly, several intuitive strategies have flourished in the literature that suggest guiding the selection during the learning process according to the magnitude (or the gradient) of the PDE residuals. This gave rise to a new family of {\it adaptive sampling} methods for PINNs (see, e.g., \cite{pmlr-v202-daw23a,doi:10.1137/19M1274067,peng2022,subramanian2023,visser2024,Chenxi2022,gPINN2021}). However, it is worth noting that even though these methods have shown remarkable performances in practice, they share the common feature of not coming with theoretical guarantees of their advantage over a uniform sampling.

The objective of this paper is to bridge the gap by providing two new methodological contributions: (i) Recalling that minimizing an empirical loss in machine learning can be approached from a mathematical perspective as the approximation of the integral of some function $f$, we propose a new quadrature rule based on a simple trapezoidal interpolation and information about the second-order derivative $f{''}$. We derive an upper bound on the approximation error and show its tightness compared to that of issued from an equispaced uniform grid. This theoretical result is supported by several experiments. (ii) This finding prompts us to design a new theoretically founded adaptive sampling method for PINNs where $f$ takes the form of the residual-based loss function. This  strategy 
selects collocation points in the spatio-temporal domain where $f{''}$ varies the most. Experiments conducted on 1D and 2D PDEs highlight the interesting properties of our method.

The rest of this paper is organized as follows: in Section~\ref{sec:BG}, we introduce the  necessary background and related work; Section~\ref{sec:quadrature} is devoted to the presentation of our refined quadrature method and the upper bound derived on the total approximation error. In Section~\ref{sec:PINN}, we leverage our quadrature method to propose a new adaptive sampling method for PINNs and test it on 1D and 2D PDEs.

\section{Background and Related Work} \label{sec:BG}
In this paper, we consider PDEs of the general form: $\frac{\partial u}{\partial t}+{\mathcal N[}u;\phi]=0,$ 
where ${\mathcal N[}\cdot ;\phi]$ is a possibly nonlinear operator parameterized by $\phi$ and involving partial derivatives in either time or (multidimensional) space and $u(t,\mathbf{x})$ is the latent hidden solution, with $t \in [0,T]$ and  $\mathbf{x} \in \Omega$. This equation is typically augmented by appropriate initial and boundary conditions defined respectively as follows: 
\begin{eqnarray}
    {\mathcal I}[u](0,\mathbf{x})=0,&  & \hspace{0.5cm} \mathbf{x} \in \Omega \label{eq:IC} \nonumber \\
 {\mathcal B}[u](t,\mathbf{x})=0, & & \hspace{0.5cm} \mathbf{x} \in \partial \Omega, t \in [0,T]   \nonumber \label{eq:BC}
\end{eqnarray}
where ${\mathcal B}$ is a boundary operator that applies to the domain boundary $\partial \Omega$, and  ${\mathcal I}$ is an initial operator describing what happens at $t=0$.

A PINN \cite{raissi2019physics} aims at learning an approximation $u_{\theta}(t,\mathbf{x})$ of the solution $u(t,\mathbf{x})$ by optimizing the parameters $\theta$ of a neural network through the minimization of a loss ${\mathcal L}(\theta)$ composed of the following non-negative PDE residual terms:
\begin{eqnarray}
{\mathcal L}_{\mathcal N}(\theta) & = & \int_{[0,T] \times \Omega} \left(\frac{\partial u_{\theta}}{\partial t}+{\mathcal N[}u_{\theta};\phi]\right)^2 dtd\mathbf{x}  \label{eq:col} \nonumber \\
{\mathcal L}_{\mathcal I}(\theta) & = & \int_{\Omega} ({\mathcal I}[u_{\theta}](0,\mathbf{x}))^2 d\mathbf{x}  \label{eq:init} \nonumber \\
{\mathcal L}_{\mathcal B}(\theta) & = & \int_{\partial \Omega} \left( {\mathcal B}[u_{\theta}](t,\mathbf{x}) \right)^2 dtd\mathbf{x} \label{eq:bound} \nonumber 
\end{eqnarray}
Therefore, a PINN optimization problem takes the following form\footnote{Note that PINNs can easily incorporate both PDE information and data measurements into the loss function. Our contributions still hold in such hybrid scenario.}:
\begin{eqnarray}
\underset{\theta}{\min} \hspace{0.1cm} {\mathcal L}(\theta) = \underset{\theta}{\min} (  {\mathcal L}_{\mathcal N}(\theta)+\lambda_1{\mathcal L}_{\mathcal I}(\theta)+\lambda_2{\mathcal L}_{\mathcal B}(\theta)+\lambda_3R(\theta)), \label{eq:loss}
\end{eqnarray}
where $\lambda_1,\lambda_2,\lambda_3$ are hyperparameters and $R(\cdot)$ is some regularization term. Since ${\mathcal L}(\theta)$ involves integrals, it cannot be directly minimized. In practice, these three integrals are approximated by finite sums computed over $N_{\mathcal N}$ {\it collocation}, $N_{\mathcal I}$ {\it initial} and $N_{\mathcal B}$ {\it boundary} points respectively. \\

From a mathematical perspective, one of the underlying problems when solving Eq.\eqref{eq:loss} involves approximating the integral of some function $f:\Omega\longrightarrow \mathbb{R}$ from $N$ evaluations of the integrand by a suitable {\it numerical quadrature rule} such that:
\begin{eqnarray}
     \sum_{i=1}^N w_if(\mathbf{x}_i) \approx \int_{\Omega} f(\mathbf{x})d\mathbf{x}, \label{eq:quadra-rule}
\end{eqnarray}
where $ w_i\geq 0$ are so-called quadrature weights. It is well-known  that the accuracy of this approximation depends on the chosen quadrature rule, the regularity of $f$ and the number of quadrature points $N$. 
If this remark obviously holds for any machine learning problem minimizing the empirical counterpart of some {\it true risk} with $N$ training data, it is even truer when it comes to learning surrogate neural solvers of complicated PDEs. This explains why, despite a remarkable effectiveness, PINNs have been shown to face pathological behaviors. In particular, they can be subject to trivial solutions with 0 residuals while converging to an incorrect solution as illustrated, e.g., in \cite{Chandrajit2021,doumeche2023convergence} (characterized as “failure modes” of PINNs, see, e.g., \cite{wang2020pinnsfailtrainneural}). 
One way to overcome this pitfall consists in resorting to a suitable regularization term $R(\theta)$ (in Eq.\eqref{eq:loss}) as done in gPINN \cite{gPINN2021} that embeds the gradient of the PDE residuals in the loss so as to enforce their derivatives to be zero as well, or in \cite{doumeche2023convergence}, where the authors use a ridge regularization  associated with a Sobolev norm to make PINNs both consistent and strongly convergent. 

Regularization apart, the location and distribution of the $N=N_{\mathcal N}+N_{\mathcal I}+N_{\mathcal B}$ quadrature points are key and they can have a significant influence on the accuracy and/or the convergence of PINNs. Yet, equispaced uniform grids and uniformly random sampling have been widely used up to now and it is only recently that the placement of these quadrature points has become an active area of research for PINNs leading to several adaptive nonuniform sampling methods (see an extensive comparison study, e.g., in \cite{Chenxi2022}). Beyond being easy to operate, one reason that may justify the still widespread use of uniform sampling stems from the resulting possibility to leverage theoretical frameworks for deriving error estimates for PINNs. For instance,  using a midpoint quadrature rule with a regular grid has led to the first  approximation error bounds with tanh PINNs (see, e.g., \cite{De_Ryck_2021,girault:hal-04518335}). On the other hand, taking advantage of  uniformly sampled collocation points and resorting to concentration inequalities, the authors of \cite{doumeche2023convergence} derived generalization bounds for this new family of networks. Setting theoretical considerations aside, several methods have been designed during the past four years for experimentally improving  uniform sampling approaches. Residual-based Adaptive Refinement \cite{doi:10.1137/19M1274067} (a.k.a. RAR) is a greedy adaptive method which consists in adding new collocation points along the learning iterations by selecting the locations where the PDE residuals are the largest. 
Although RAR has been shown to improve the performance of PINNs, its main limitation (beyond the requirement of a dense set of  collocation candidates) lies in the fact that it reduces the opportunity to explore other regions of the space by always picking locations with the largest residuals. Introduced in 2023, 
RAD \cite{Chenxi2022}, for Residual-based Adaptive Distribution, replaces the current collocation points by new ones drawn according to a distribution proportional to the PDE residuals, thus introducing some stochasticity in the sampling process.  A hybrid method, called RAR-D, combines RAR and RAD by stacking new points according to the density function. Both RAD and RAR-D (and other adaptive residual-based distribution variants, e.g., \cite{pmlr-v202-daw23a,peng2022})  have been shown to perform better than non-adaptive uniform sampling \cite{Chenxi2022}. In this category of methods, Retain-Resample-Release sampling (R3) algorithm \cite{pmlr-v202-daw23a} is the only one that accumulates collocation points in regions of high PDE residuals and which comes with guarantees. Indeed, the authors prove that this algorithm retains points from high residual regions if they persist over iterations  and releases points if they have been resolved by PINN training.\\

While the previous methods leverage the magnitude of the PDE residuals to guide the selection of the locations, some others employ their gradient. This is the case in \cite{gPINN2021} where the authors combine gPINN and RAR. In the same vein, the authors of \cite{subramanian2023} present an {\it Adaptive Sampling for Self-supervision} method that allows a combination of uniformly sampled points and data drawn according to the residuals or their gradient. The first-order derivative has been also recently exploited in PACMANN \cite{visser2024} which leverages gradient information for moving collocation points toward regions of higher residuals using gradient-based optimization.  These methods have been shown to further improve the performance of  PINNs, especially for PDEs where solutions have steep changes.  Drawing inspiration from these derivative-based methods, we define in the next section  a new provably accurate quadrature rule for approximating  the integral of a function. We prove that this method based on a simple trapezoid-based interpolation and second-order derivative information gives a tighter error bound than an equispaced uniform grid-based quadrature. 
Leveraging this finding, we present then, as far as we know, the first theoretically founded adaptive sampling method of collocation points for PINNs based on the Hessian of the PDE residuals.

\section{Quadrature Rules} \label{sec:quadrature}
Let $a, b \in \mathbb{R}$ and consider a function $f:[a,b] \longrightarrow \mathbb{R}$. We recall that the goal of the quadrature problem is to approximate the integral $\int_a^bf(x)dx$ by an expression of the form $\sum_{i}w_if(x_i)$ where the $w_i \in \mathbb{R}$ are the weights of the quadrature points $x_i$. There are two main strategies of quadrature rules:
\begin{enumerate}
    \item We take $x_0, \dots, x_N \in [a,b]$ and the weights $w_0, \dots, w_N$ are obtained by approximating the function $f$ using polynomials. This is the Newton-Cotes method.
    \item We fix a scalar product on the space of polynomials, which provides an orthonormal basis via a Gram-Schmidt procedure. Then the zeros of one element of this basis are the $x_0, \dots, x_N$, and the weights are found by a matrix inversion. 
\end{enumerate}
The second approach, in which the points are a consequence of the chosen scalar product, is very effective and has many variants depending on the interval $[a,b]$, such as Gauss-Legendre for $a,b \in \mathbb{R}$, Gauss-Chebyshev for $[a,b]=[-1,1]$, and Gauss-Hermite for $[a,b]=[-\infty,\infty]$. However, as our objective is to leverage a quadrature rule for designing a new efficient adaptive sampling method for PINNs, the first approach appears to be much more suitable from a computational perspective because it does not require to compute the orthonormal basis as well as the zeros of one of its elements followed by matrix inversion.\\

On the other hand, an issue one might encounter using the Newton-Cotes method is that, as it relies on a polynomial approximation of $f$, the so-called Runge's phenomenon might occur. This problem happens when   $\on{max}_{x \in [a,b]}|f^{(n)}(x)|$ is an increasing function of $n$, where $f^{(n)}$ is the $n^{th}$-order derivative. Under these circumstances, the interpolation polynomial of $f$ may have sharp oscillating spikes near the edges of the interval, and thus diverging from $f$ as $N$ increases. In order to avoid this pitfall, we suggest controlling the expressiveness of the approximation and  focus on a simple trapezoid-based interpolation.

In the following, we first present the quadrature rule when the quadrature points are evenly spaced in the domain and recall a known result on the upper bound on the approximation error in this trapezoid-based setting. Then, we introduced a refined quadrature rule which selects the quadrature points where the second-order derivative of $f$ varies the most. The main result of this section takes the form of a tighter upper bound on the total approximation error.

\subsection{Uniform approach} \label{sec:uniform}

Let us approximate $f$ on $[z_1, z_2]$, with $z_1, z_2 \in [a,b]$,  by the line passing through $(z_1, f(z_1))$ and $(z_2, f(z_2))$. We set $h=z_2-z_1$. The interpolation $p(x)$ is defined as follows: 
\begin{align}\label{eq:p(x)}
p(x)=\frac{x-z_1}{h}f(z_2)-\frac{x-z_2}{h}f(z_1).
\end{align}
As it is a polynomial of degree $1$, the Lagrange remainder form states that there exists $\xi \in [z_1, z_2]$ such that
\[
f(x)-p(x)=\frac{f''(\xi)}{2}(x-z_1)(x-z_2).
\]
Note that while the existence of $\xi$ is guaranteed, we do not know its position within $[z_1, z_2]$. Moreover, by setting $s=\frac{x-z_1}{h}$, we see that $(x-z_1)(x-z_2)=s(s-1)h^2$. It follows that the error $E_{z_1,z_2}=\int_{z_1}^{z_2}f(x)dx-\int_{z_1}^{z_2}p(x)dx$ on $[z_1, z_2]$ satisfies
\begin{align}\label{eq:error_interval}
E_{z_1,z_2}=-\frac{1}{12}h^3f''(\xi).
\end{align}
We will need the following lemma:

\begin{lemma}
Let $g(x)$ be a continuous function and let $x_0 < x_1 < \dots < x_N$ be points within its domain. Set $c_0, \dots, c_N \geq 0$. Then there exists $\xi \in [x_0,x_N]$ such that
\[
\sum_{i=0}^Nc_ig(x_i)=g(\xi)\sum_{i=0}^Nc_i.
\]
\end{lemma}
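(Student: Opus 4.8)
The plan is to reduce the statement to the intermediate value theorem via a weighted-average (squeeze) argument, exactly as in the mean value theorem for integrals. First I would dispose of the degenerate case: if $\sum_{i=0}^N c_i = 0$, then since every $c_i \geq 0$ this forces $c_i = 0$ for all $i$, so both sides of the claimed identity are $0$ and any $\xi \in [x_0, x_N]$ works. Hence from now on I may assume $C := \sum_{i=0}^N c_i > 0$.

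Next, since $\{g(x_0), \dots, g(x_N)\}$ is a finite set of reals, $g$ attains a minimum value $m = g(x_p)$ and a maximum value $M = g(x_q)$ over the nodes, for some indices $p, q \in \{0, \dots, N\}$. Using $m \leq g(x_i) \leq M$ and $c_i \geq 0$ for every $i$, I multiply through by $c_i$ and sum to obtain $mC \leq \sum_{i=0}^N c_i g(x_i) \leq MC$; dividing by $C > 0$ gives
\[
m \;\leq\; \frac{1}{C}\sum_{i=0}^N c_i g(x_i) \;\leq\; M .
\]
Finally, $g$ is continuous on the closed interval with endpoints $x_p$ and $x_q$, which is contained in $[x_0, x_N]$, and takes the values $m$ and $M$ at those endpoints. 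By the intermediate value theorem there is a point $\xi$ lying between $x_p$ and $x_q$, hence $\xi \in [x_0, x_N]$, with $g(\xi) = \frac{1}{C}\sum_{i=0}^N c_i g(x_i)$, i.e. $\sum_{i=0}^N c_i g(x_i) = g(\xi)\sum_{i=0}^N c_i$, as desired.

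There is no substantial obstacle here: the only point deserving a word of care is the sub-case $m = M$ (all the values $g(x_i)$ coincide, or $p = q$), in which the intermediate value step is trivial because the weighted average already equals the common value; the rest is routine. The result is simply the discrete analogue of the mean value theorem for integrals, and this argument is self-contained.
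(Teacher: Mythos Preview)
Your argument is correct and is the standard proof of this discrete mean value statement: handle the trivial all-zero-weight case, squeeze the weighted average between the minimum and maximum of the $g(x_i)$, and invoke the intermediate value theorem on the subinterval between the extremal nodes. The paper does not actually supply its own proof here; it simply refers the reader to \cite[Theorem 20.5.1]{Hamming}, so your self-contained version fills in exactly what the citation covers.
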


\begin{proof}
cf. \cite[Theorem 20.5.1]{Hamming}.
\end{proof}

\begin{proposition}
Set $a,b \in \mathbb{R}$, $N \in \mathbb{N}$, and $f:[a,b]\longrightarrow \mathbb{R}$ a function of class $C^2$, i.e., with  continuous second derivative. Then the total error $E_{\mathrm{tot},\mathrm{unif}}$ of the uniform quadrature of the integral of $f$ by $N$ trapezoids is upper bounded by: 
\begin{eqnarray}
E_{\mathrm{tot},\mathrm{unif}} \leq B_{\mathrm{tot},\mathrm{unif}}=\frac{1}{12}\frac{(b-a)^3}{N^2}\underset{x \in [a,b]}{\on{max}}|f''(x)|. \label{eq:upper1}
\end{eqnarray}
\end{proposition}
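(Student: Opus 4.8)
The plan is to partition $[a,b]$ into $N$ consecutive subintervals of equal length $h=(b-a)/N$, apply the per-interval error identity \eqref{eq:error_interval} on each piece, and then aggregate the $N$ local errors into a single term by means of the Lemma.

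First I would set $z_i = a + ih$ for $i = 0,\dots,N$, so that $[a,b]$ is split into the subintervals $[z_i,z_{i+1}]$ of common width $h$, and the quadrature rule is obtained by summing the trapezoid interpolants $p$ from \eqref{eq:p(x)} over these pieces. By additivity of the integral, the total error, understood as $\sum_{i=0}^{N-1}\bigl|E_{z_i,z_{i+1}}\bigr|$ (which also dominates $\bigl|\sum_{i=0}^{N-1}E_{z_i,z_{i+1}}\bigr|$), decomposes over the subintervals. Since $f$ is $C^2$, \eqref{eq:error_interval} applies on each $[z_i,z_{i+1}]$: there exists $\xi_i \in [z_i,z_{i+1}]$ with $E_{z_i,z_{i+1}} = -\tfrac{1}{12}h^3 f''(\xi_i)$, whence
\[
E_{\mathrm{tot},\mathrm{unif}} \;=\; \sum_{i=0}^{N-1}\bigl|E_{z_i,z_{i+1}}\bigr| \;=\; \frac{1}{12}h^3\sum_{i=0}^{N-1}\bigl|f''(\xi_i)\bigr|.
\]

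Next I would invoke the Lemma with $g = |f''|$ (continuous because $f\in C^2$), the points $\xi_0 \le \xi_1 \le \dots \le \xi_{N-1}$, and weights $c_i = 1 \ge 0$: this produces $\xi \in [a,b]$ with $\sum_{i=0}^{N-1}|f''(\xi_i)| = N\,|f''(\xi)| \le N\on{max}_{x\in[a,b]}|f''(x)|$. Substituting $h = (b-a)/N$ gives $h^3 N = (b-a)^3/N^2$, and therefore
\[
E_{\mathrm{tot},\mathrm{unif}} \;\le\; \frac{1}{12}\,\frac{(b-a)^3}{N^2}\,\on{max}_{x\in[a,b]}|f''(x)| \;=\; B_{\mathrm{tot},\mathrm{unif}},
\]
which is the claimed bound.

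The only delicate point is that the Lemma is stated for strictly ordered points, whereas the $\xi_i$ coming from \eqref{eq:error_interval} are only guaranteed to lie in subintervals that overlap at endpoints, hence are weakly increasing and could coincide at a partition node. This is harmless: one may either perturb the offending $\xi_i$ slightly (using continuity of $f''$), or simply note that the underlying intermediate-value argument behind the Lemma — $\sum c_i g(x_i)/\sum c_i$ lies between $\min_i g(x_i)$ and $\max_i g(x_i)$, which are attained on $[x_0,x_N]$ — is insensitive to ties. Beyond this, everything reduces to the additivity of the integral and the elementary algebra with $h$, so I do not anticipate a substantive obstacle.
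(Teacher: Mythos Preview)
Your proof is correct and follows essentially the same route as the paper: partition into $N$ equal subintervals, invoke the per-interval error formula \eqref{eq:error_interval}, aggregate via the Lemma, and simplify $Nh^3=(b-a)^3/N^2$. The only cosmetic difference is that the paper keeps the signed error $E_{\mathrm{tot},\mathrm{unif}}=\sum_i -\tfrac{1}{12}h^3 f''(\xi_i)$, applies the Lemma with $g=f''$ to obtain the exact identity $E_{\mathrm{tot},\mathrm{unif}}=-\tfrac{1}{12}(b-a)h^2 f''(\xi)$, and only then takes an absolute value, whereas you pass to $|E_{z_i,z_{i+1}}|$ first and apply the Lemma with $g=|f''|$; both variants yield the same bound.
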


\begin{proof}
Consider the interval $[a,b]$ and divide it into $N$ subintervals of length $h=\frac{b-a}{N}$. We set $x_0=a$ and $x_i=x_0+hi$ for $1 \leq i \leq N$. We approximate the function $f$ on each $[x_{i-1}, x_{i}]$ by a trapezoid, and so $p(x)$ is a piece-wise linear function given by by Eq.\eqref{eq:p(x)}. Using Eq.\eqref{eq:error_interval}, for each $1 \leq i \leq N$, there exists $\xi_i\in [x_{i-1}, x_{i}]$ such that the total integration error $E_{\mathrm{tot},\mathrm{unif}}$ on $[x_0, x_N]=[a,b]$ is
\[
E_{\mathrm{tot},\mathrm{unif}}=\sum_{i=1}^N -\frac{1}{12}h^3f''(\xi_i).
\]

By applying the lemma to $E_{\mathrm{tot},\mathrm{unif}}$,  there exists $\xi \in [\xi_1, \xi_N]$ such that
\begin{equation}
E_{\mathrm{tot},\mathrm{unif}}=-\frac{1}{12}Nh^3f''(\xi)=-\frac{1}{12}(b-a)h^2f''(\xi). \hfill \label{eq:E_tot} 
\end{equation}
It follows that for this uniform choice of points $x_0, \dots, x_N$, the total quadrature error satisfies
\begin{align}
\displaystyle |E_{\mathrm{tot},\mathrm{unif}}| \leq B_{\mathrm{tot},\mathrm{unif}}
\end{align}
where $B_{\mathrm{tot},\mathrm{unif}}=\displaystyle \frac{1}{12}\frac{(b-a)^3}{N^2}\underset{x \in [a,b]}{\on{max}}|f''(x)|$.
\end{proof}

\subsection{Second-order Derivative-based Quadrature Rule} \label{sec:refined}

 Rather than defining the quadrature points according to an equispaced uniform grid (as done in the previous section), we suggest here to sample them according to the variations of the second-order derivative of $f$. Consider $f:[a,b] \longrightarrow \mathbb{R}$ a function of class $C^2$ and set $k \leq N$ integers. We divide $[a,b]$ into $k$ intervals $I_j$, $1 \leq j \leq k$, of length $l=\frac{b-a}{k}$. To allow a fair comparison with the uniform method  of Sec.~\ref{sec:uniform}, on each subinterval, we perform a trapezoid interpolation such that the total number of trapezoids is $N$. Let us  split each $I_j$ into $n_j$  subintervals where
\begin{equation}\label{eq:n_j}
   n_j=\biggl\lceil N \frac{\sqrt{M_j}}{ \sum_{p=1}^k \sqrt{M_p}}\biggl\rceil  
\end{equation}
and where $M_j=\underset{x \in I_j}{\on{max}}|f''(x)|$ for each $1 \leq j \leq k$.  
It follows that $\sum_{j=1}^k n_j \approx \sum_{j=1}^k N \frac{\sqrt{M_j}}{ \sum_{p=1}^k \sqrt{M_p}}=N$ where the difference between the number of trapezoids and $N$ is at most $k$ due to the ceiling function. As a consequence, if $N \gg k$, this difference becomes negligible.
For each interval $I_j$, our refined method consists in doing 
a piecewise interpolation of $f_{\,\mkern 1mu \vrule height 2ex\mkern2mu I_j} : I_j \longrightarrow \mathbb{R}$ with $n_j$ trapezoids and then aggregating the results. 
We can now present our main theoretical result.

\begin{theorem}\label{thm:tight}
Set $a,b, \in \mathbb{R}$, $k, N \in \mathbb{N}$ with $k \leq N$, and $f:[a,b]\longrightarrow \mathbb{R}$ a function of class $C^2$. Then the upper bound $B_{\mathrm{tot},\mathrm{refined}} $ on the total error $E_{\mathrm{tot},\mathrm{refined}}$ of our refined quadrature of the integral of $f$ is tighter than that of Eq.\eqref{eq:upper1} with the same number $N$ of trapezoids:
\[
E_{\mathrm{tot},\mathrm{refined}} \leq B_{\mathrm{tot},\mathrm{refined}} = \displaystyle\sum_{j=1}^k \frac{l^3}{12}\frac{1}{\bigg(\biggl\lceil N \frac{\sqrt{M_j}}{ \sum_{p=1}^k \sqrt{M_p}}\biggl\rceil\bigg)^2}|f''(\xi_j)| \leq B_{\mathrm{tot},\mathrm{unif}}.
\]
Here, for each $1 \leq j \leq k$, $\xi_j$ is a well-chosen element in $I_j$. In particular, the more $f''$ varies the more the inequality on the right-hand side is strict, thus in favor of our refined quadrature rule.
\end{theorem}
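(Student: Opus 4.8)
The plan is to reduce the whole argument to the single-interval computation already carried out in Section~\ref{sec:uniform} and then compare the two resulting sums term by term. First I would fix $j\in\{1,\dots,k\}$ and apply the reasoning that produced Eq.~\eqref{eq:E_tot} to the restriction $f_{\,\mkern 1mu \vrule height 2ex\mkern2mu I_j}$, now with $I_j$ in the role of $[a,b]$ and $n_j$ in the role of $N$: since $f$ is $C^2$ on $I_j$ and there we interpolate it by $n_j$ trapezoids of common width $l/n_j$, there exists $\xi_j\in I_j$ with
\[
E_j \;=\; -\frac{1}{12}\,l\left(\frac{l}{n_j}\right)^{\!2}f''(\xi_j)\;=\;-\frac{l^3}{12\,n_j^{2}}\,f''(\xi_j).
\]
If some $M_j=0$ then $f$ is affine on $I_j$, so $E_j=0$ and that interval simply drops out; otherwise $n_j\ge 1$ by Eq.~\eqref{eq:n_j} and the above is well defined.

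Next, since the refined rule is the concatenation of these per-interval trapezoid rules, $E_{\mathrm{tot},\mathrm{refined}}=\sum_{j=1}^{k}E_j$, so the triangle inequality gives $E_{\mathrm{tot},\mathrm{refined}}\le\sum_{j}|E_j|=\sum_{j}\frac{l^3}{12\,n_j^{2}}|f''(\xi_j)|=B_{\mathrm{tot},\mathrm{refined}}$, which is the first claimed bound (one could, as in the Proposition, contract this sum further with the Lemma, but the stated form is what the comparison needs). For the second inequality I would bound $|f''(\xi_j)|\le M_j$, substitute $l=(b-a)/k$, and then use Eq.~\eqref{eq:n_j} in the form $n_j\ge N\sqrt{M_j}\big/\sum_{p=1}^{k}\sqrt{M_p}$; this makes each summand satisfy
\[
\frac{M_j}{n_j^{2}}\;\le\;\frac{1}{N^{2}}\Big(\sum_{p=1}^{k}\sqrt{M_p}\Big)^{\!2},
\]
a bound independent of $j$ — this is precisely why the exponent $\tfrac12$ appears in Eq.~\eqref{eq:n_j}: it flattens the per-interval error estimate. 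Summing over the $k$ intervals and then invoking Cauchy--Schwarz, $\big(\sum_p\sqrt{M_p}\big)^2\le k\sum_p M_p\le k^2\max_{x\in[a,b]}|f''(x)|$, I would get
\[
B_{\mathrm{tot},\mathrm{refined}}\;\le\;\frac{(b-a)^3}{12\,k^3}\cdot\frac{k}{N^{2}}\Big(\sum_{p=1}^{k}\sqrt{M_p}\Big)^{\!2}\;\le\;\frac{(b-a)^{3}}{12\,N^{2}}\,\underset{x\in[a,b]}{\on{max}}|f''(x)|\;=\;B_{\mathrm{tot},\mathrm{unif}},
\]
which is the bound of Eq.~\eqref{eq:upper1}.

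Finally, for the qualitative statement I would track equality cases: beyond $|f''(\xi_j)|\le M_j$, the only inequalities used are $M_p\le\max_{[a,b]}|f''|$ (equivalently, Cauchy--Schwarz), and these are equalities exactly when all the $M_p$ coincide, i.e. when $|f''|$ attains the same maximal value on each $I_j$. Hence the gap $k^{2}\max_{[a,b]}|f''|-\big(\sum_p\sqrt{M_p}\big)^{2}$, and with it the slack in the final inequality, grows with the spread of the $M_j$ — that is, with how much $f''$ varies over $[a,b]$ — which is the asserted advantage of the refined rule. I do not expect a genuine obstacle: the computations are routine, and the only points needing care are the passage in the first step from the global estimate of Section~\ref{sec:uniform} to its application on each $I_j$ with the nonuniform counts $n_j$, together with the harmless effect of the ceiling in Eq.~\eqref{eq:n_j}, which can only increase $n_j$ and hence only decrease $B_{\mathrm{tot},\mathrm{refined}}$. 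The real content, rather than a difficulty, is noticing that $n_j\propto\sqrt{M_j}$ equalizes the per-interval bound and thereby reduces the comparison to one line of Cauchy--Schwarz.
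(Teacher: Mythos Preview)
Your proof is correct and follows essentially the same approach as the paper's: apply Eq.~\eqref{eq:E_tot} on each $I_j$ to produce the $\xi_j$, sum, drop the ceiling, and reduce to $B_{\mathrm{tot},\mathrm{unif}}$ by elementary inequalities. The only difference is cosmetic --- the paper bounds $\sqrt{M_p}\le\sqrt{M}$ termwise and then uses $|f''(\xi_j)|\le M_j$, whereas you reverse the order and close with Cauchy--Schwarz, which yields a slightly sharper intermediate estimate and makes more explicit why the allocation $n_j\propto\sqrt{M_j}$ equalizes the per-interval bounds.
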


\begin{proof}
Set $h_j=\frac{l}{n_j}$ for each $1 \leq j \leq k$. Then, for each $1 \leq j \leq k$, we use Eq.\eqref{eq:E_tot} to show that there exists $\xi_j \in I_j$ such that the total error of quadrature satisfies
\[
E_{\mathrm{tot},\mathrm{refined}}=\sum_{j=1}^k -\frac{1}{12}lh_j^2f''(\xi_j).
\]
It follows that $E_{\mathrm{tot},\mathrm{refined}}$ is upper bounded by
{\allowdisplaybreaks
\begin{align}
B_{\mathrm{tot},\mathrm{refined}} & = \ \displaystyle\sum_{j=1}^k \frac{l^3}{12}\frac{1}{\bigg(\biggl\lceil N \frac{\sqrt{M_j}}{ \sum_{p=1}^k \sqrt{M_p}}\biggl\rceil\bigg)^2}|f''(\xi_j)| \nonumber\\[5pt]
 & \ \leq  \displaystyle\sum_{j=1}^k \frac{l^3}{12}\frac{1}{\bigg( N \frac{\sqrt{M_j}}{ \sum_{p=1}^k \sqrt{M_p}}\bigg)^2}|f''(\xi_j)|  \nonumber\\[5pt]
 & \ = \displaystyle \frac{l^3}{12N^2} \sum_{j=1}^k \bigg(\sum_{p=1}^k\frac{\sqrt{M_p}}{\sqrt{M_j}}\bigg)^2|f''(\xi_j)|  \nonumber\\[5pt]
 & \ \leq \displaystyle \frac{l^3}{12N^2} \sum_{j=1}^k \bigg(\sum_{p=1}^k\frac{\sqrt{M}}{\sqrt{M_j}}\bigg)^2|f''(\xi_j)| \hfill \text{ (where } M=\underset{1 \leq p \leq k}{\on{max}}M_p) \nonumber\\[5pt]
& \ =  \displaystyle \frac{l^3}{12N^2} \sum_{j=1}^k k^2 \frac{M}{M_j}|f''(\xi_j)| \nonumber\\[2pt]
 & \ =  \displaystyle \frac{l^3k^2}{12N^2} \bigg(\sum_{j=1}^k  \frac{|f''(\xi_j)|}{M_j}\bigg)M \nonumber\\[2pt]
 & \ \leq  \displaystyle \frac{l^3k^2}{12N^2} \bigg(\sum_{j=1}^k  1\bigg)M  \label{eq:ineq}\\[2pt]
 & \ =  \displaystyle \frac{l^3k^3}{12N^2} M = B_{\mathrm{tot},\mathrm{unif}}. \nonumber 
\end{align}
}

In particular, note that we get Eq.\eqref{eq:ineq} by using the fact that $|f''(\xi_j)| \leq M_j$, and so the more $f''$ varies, the more this inequality is strict.
\end{proof}

\subsection{Experiments}
In this section, we illustrate the behavior of our refined quadrature rule on three functions $f:[a,b]\longrightarrow \mathbb{R}$ and integers $k \leq N$. In order to determine $M_j$ for each $1 \leq j \leq k$, we sample $S=100$ equidistant points $x_{j,s} \in I_j$ and set $M_j=\underset{1 \leq s \leq S}{\on{max}}|f''(x_{j,s})|$. We then compute $n_j$ according to Eq.\eqref{eq:n_j}. It is worth noting at this step of the paper that the cost for computing $M_j$ (here by selecting the max from $S$ evaluations of $f''$) does not matter. The goal of this section is to give evidence that selecting $N+1$ points according to $f''$ is better in terms of quadrature error than using evenly spaced points. When it comes to taking this idea, implementing it in PINNs, and comparing it with SOTA adaptive sampling methods, the same budget in terms of collocation points will be used in the learning process.

In order to avoid pathological cases, we proceed to the following adjustments:
\begin{enumerate}
    \item If $n_j=0$, which theoretically would happen only if $f''=0$ on $I_j$, i.e., $f$ is linear on this interval, then we set $n_j=1$. This will ensure that every interval contributes to the approximation of the integral of $f$.
    \item As explained in the previous section, due to the ceiling function, the sum $\sum_{j=1}^kn_j$ might differ from $N$ by at most $k$. In order to use exactly the same number $N$ of trapezoids, we use the following rule: while $\sum_{j=1}^kn_j \neq N$,  
    \begin{itemize}
        \item[$\bullet$] if $\sum_{j=1}^kn_j >N$, then decrease $\underset{1 \leq j \leq k}{\on{max}}n_j$ by $1$;
        \item[$\bullet$] else increase $\underset{1 \leq j \leq k}{\on{min}}n_j$ by $1$.
    \end{itemize}
\end{enumerate}

For the uniform method, we select $N+1$ equidistant points between $a$ and $b$ included. These are the endpoints of the $N$ trapezoids. If we write $x_i < x_{i+1}$ for the endpoints of such a trapezoid, we approximate the integral of $f$ on $[x_i,x_{i+1}]$ by the area of said trapezoid, i.e., by $(x_{i+1}-x_i)\frac{f(x_{i+1})+f(x_i)}{2}$. It then remains to sum over all trapezoids. {The code of the examples below is available on \href{https://github.com/Antoine-ml-code/Adaptive-Sampling-for-Collocation-Points-in-PINNs-ECML-2025.git}{GitHub}.}

\subsubsection{Example 1:}\label{sec:example_1}

Consider $f(x)= (-1.4+3x^2)\on{sin}(16x)$ on $[0,2]$. For illustration, we choose $N=25$ and $k=11$. We report in Fig.\,\ref{fig:function_1} the target function $f$ (in red)  with the uniform (left) and refined (right) trapezoid approximations (in blue), and the boundary of the intervals (in green). We can see that on the right part of the domain, where there are more variations, our method is able to automatically place more points in this region. On the other hand, it uses  a smaller budget where $f$ varies less. This leads to a better approximation of the integral, reflected by a much smaller relative error ($5.47 \%$) compared to the uniform method ($15.3 \%$).

\begin{figure}[t]
\begin{center}
\includegraphics[width=0.45\textwidth]{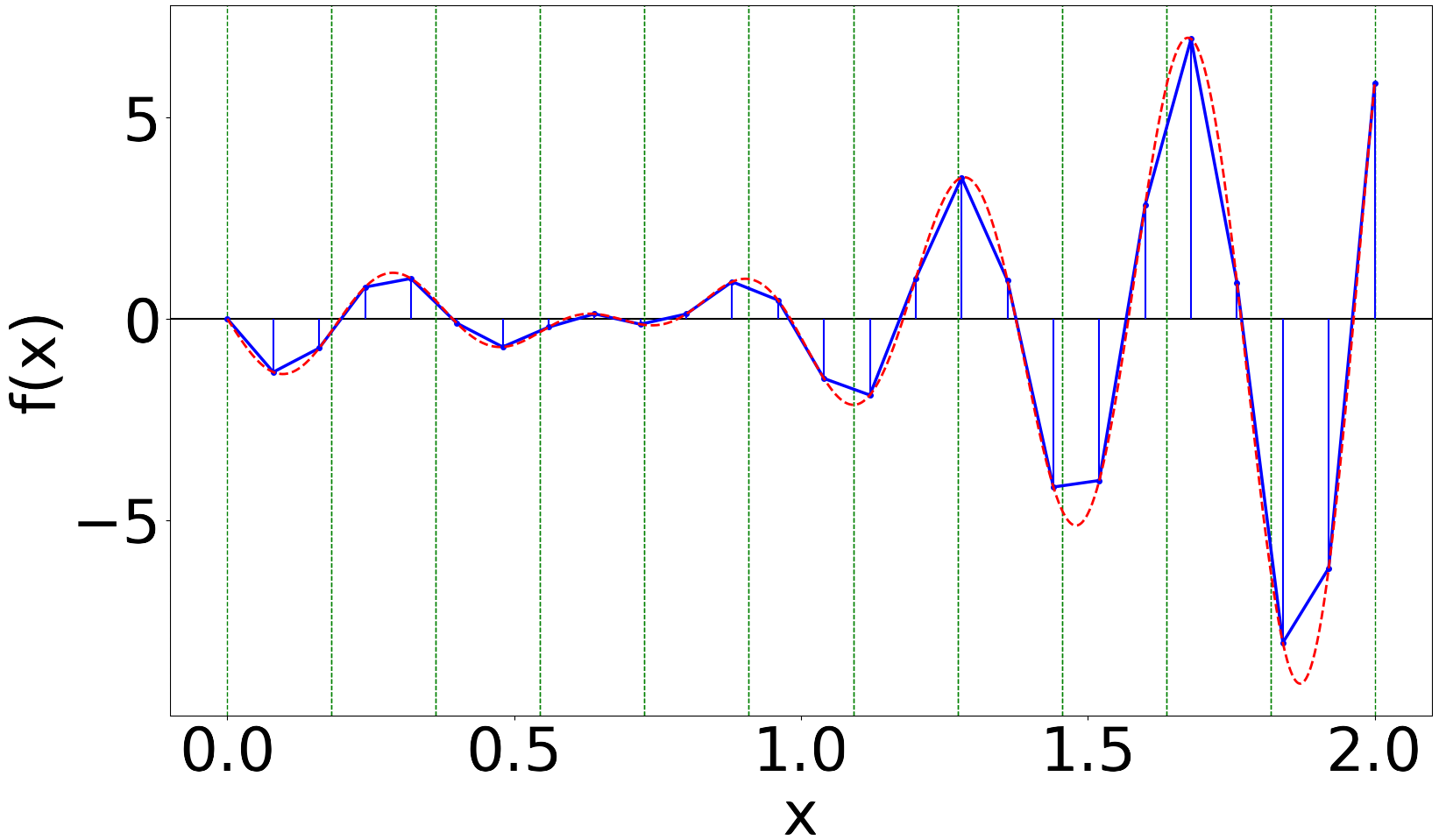}
\quad \quad \includegraphics[width=0.4\textwidth]{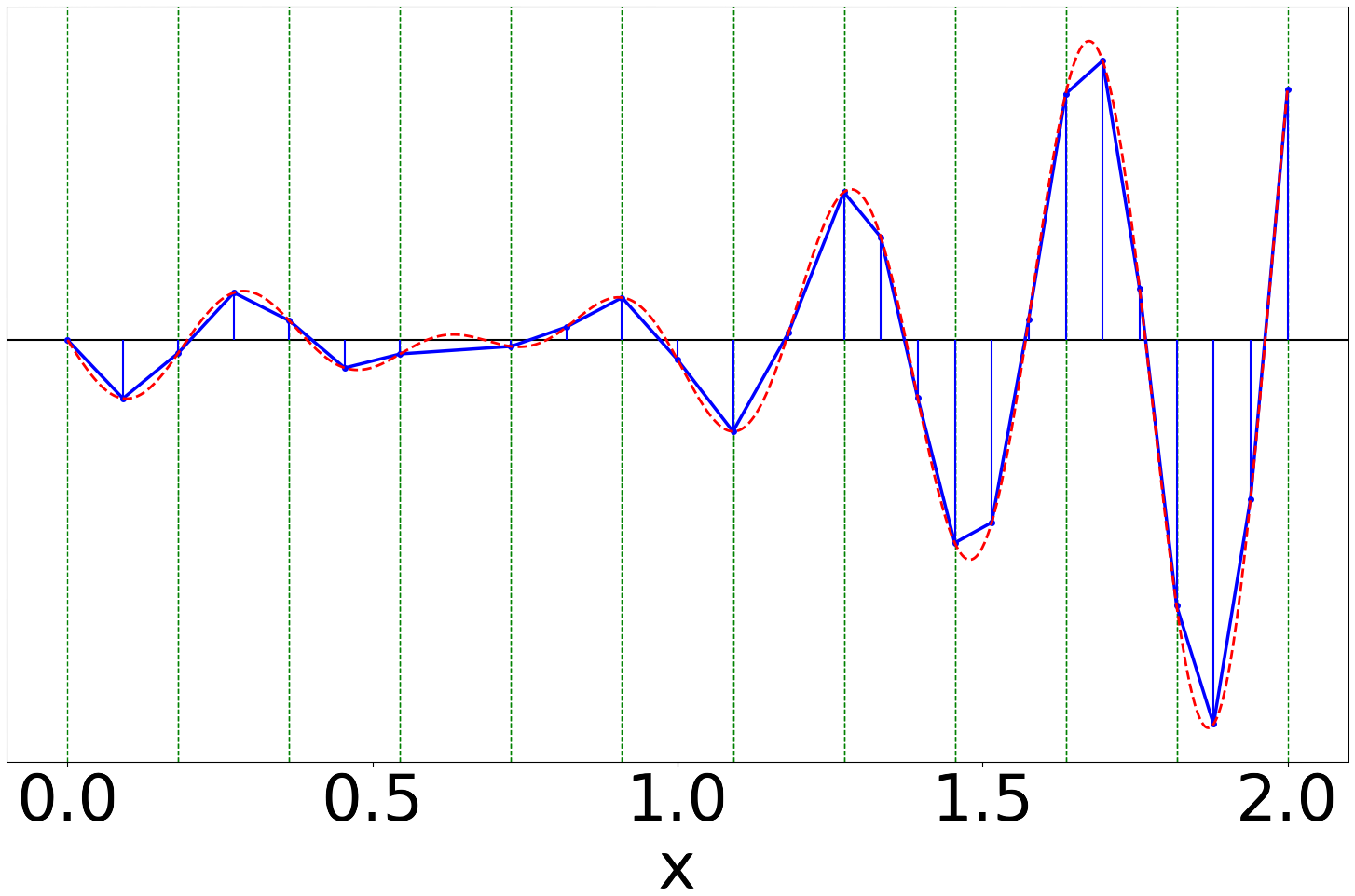}
\caption{Plots of $f(x)= (-1.4+3x^2)\on{sin}(16x)$ on $[0,2]$ (in red) and its trapezoid approximations (in blue) with $N=25$; (left): uniform method; (right): refined method with $k=11$. The relative  errors are respectively $15.3 \%$ and $5.47 \%$.}
\label{fig:function_1}
\end{center}
\end{figure}

We also compare in Fig.\,\ref{fig:Error vs nb trapez_1} both methods by varying the number $k$ of intervals (10, 20, 30 and 40). We see that except for some local spikes, our refined method (black line) consistently gives better results than the uniform method (dashed red line, which is independent from $k$). As expected, the gain is higher as $k$ increases, our method leveraging more precise information about the local variations of the function. Note that the exceptions (i.e., when the black line is above the red one) 
are mainly due to the use of the ceiling function in Eq.\eqref{eq:n_j} which may lead to  $\sum_{j=1}^kn_j \neq N$. In this case, we need to resort to the aforementioned manual adjustments that may lead to local overpessimistic approximations. 

\begin{figure}[h]
\includegraphics[width=1\textwidth]{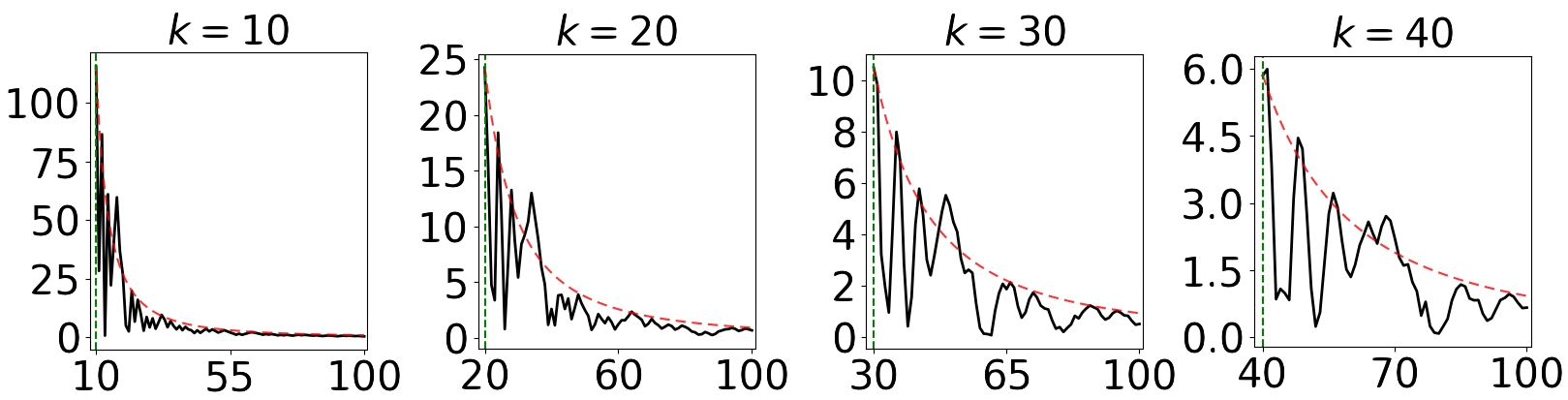}
\caption{Relative error for approximating $f(x)= (-1.4+3x^2)\on{sin}(16x)$ on $[0,2]$ as a function of $N$ for different values of $k$; (red): uniform method; (black): refined quadrature; (green): $\{N=k\}$.}
\label{fig:Error vs nb trapez_1}
\end{figure}

\subsubsection{Example 2:}\label{sec:example_2}
Consider now the function $f(x)=\sin(\frac{1}{\sqrt{x}})$ on $[0.1,1]$. We report in Fig.\,\ref{fig:function_2} an illustration of the behavior of the two quadrature methods when $N=25$ and $k=10$. This example highlights a pathological behavior of the uniform method which is not able with evenly spaced quadrature points to capture the large variations of $f$ (red curve) on some small intervals. On the other hand, our method uses only a little part of the budget (7 points among 25) for approximating the right-hand part of the function and keeps most of these quadrature points for locations where the function has steep changes.

\begin{figure}[t]
\begin{center}
\includegraphics[width=0.45\textwidth]{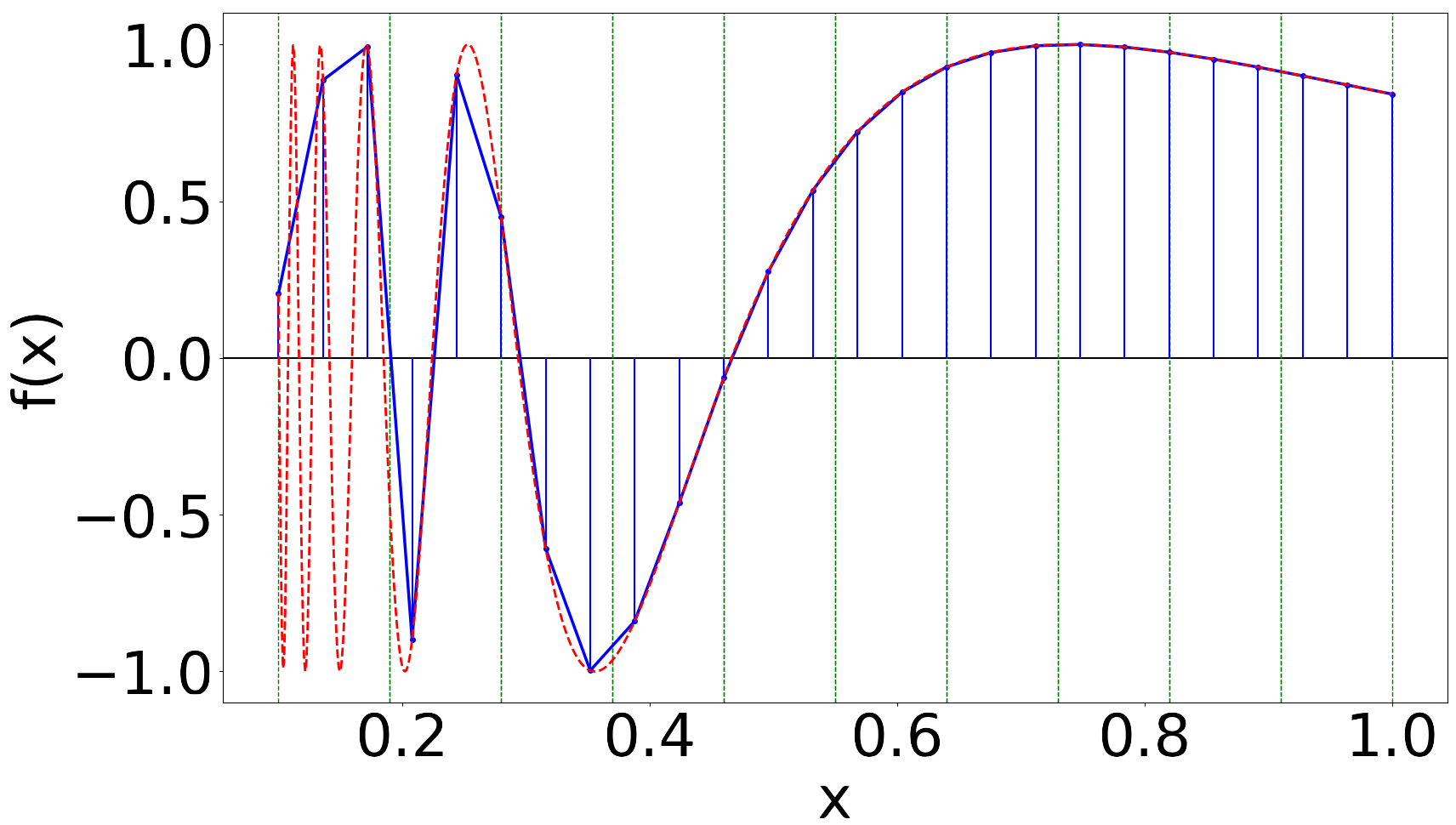}
\quad \quad \includegraphics[width=0.38\textwidth]{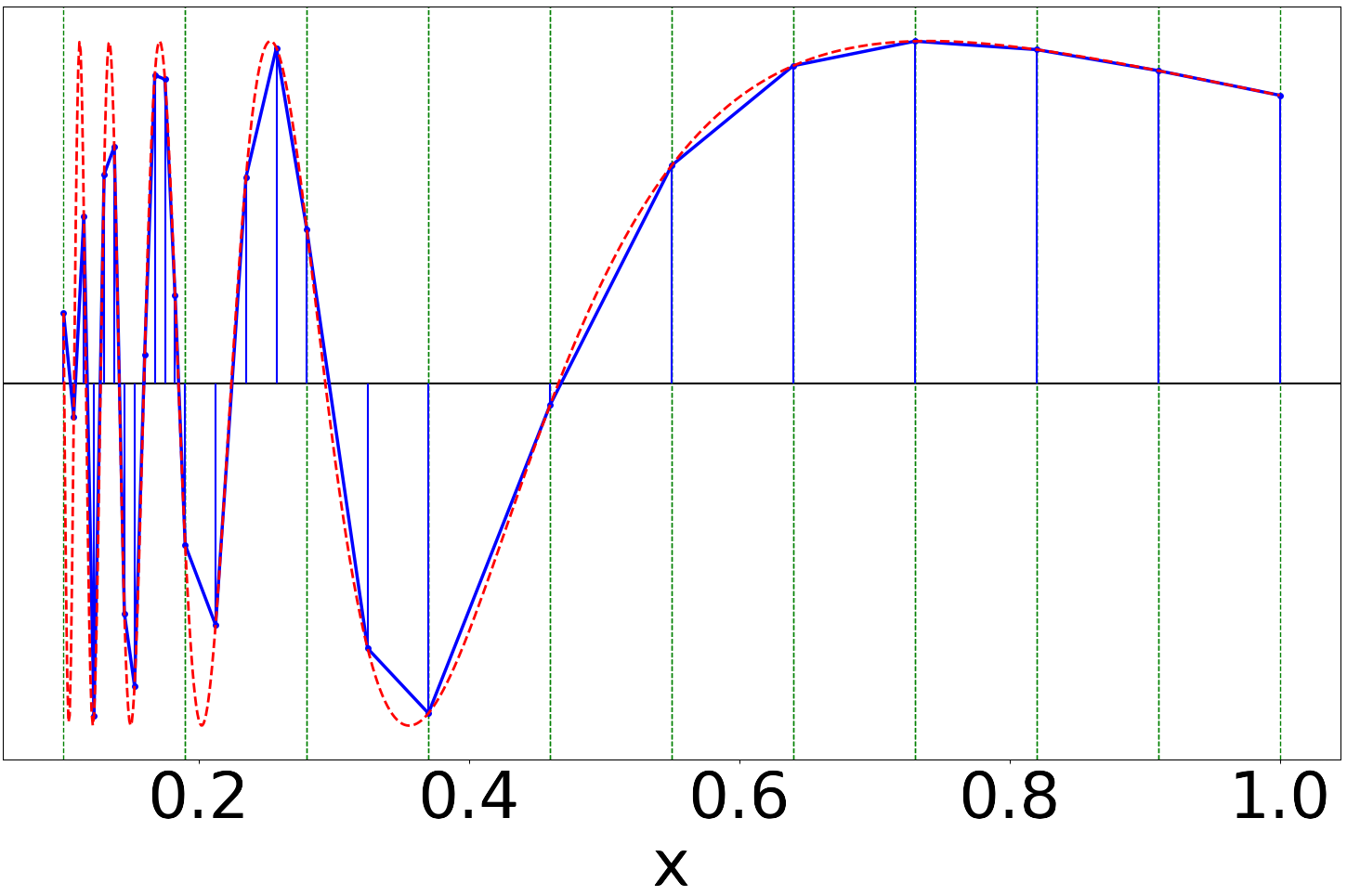}
\caption{Plots of $f(x)=\sin(\frac{1}{\sqrt{x}})$ on $[0.1,1]$ (in red) and its trapezoid approximations (in blue) with $N=25$; (left): uniform method; (right): refined method with $k=10$. The relative  errors are respectively $16.4 \%$ and $1.89 \%$.}
\label{fig:function_2}
\end{center}
\end{figure}

Fig.\,\ref{fig:Error vs nb trapez_2} reports the relative approximation error as $k$ grows from 10 to 40. As already observed in the first example, the relative gain of our refined method increases as $k$ grows by benefiting from finer intervals and thus better capturing the variations of the function.

\begin{figure}[h]
\begin{center}
\includegraphics[width=0.95\textwidth]{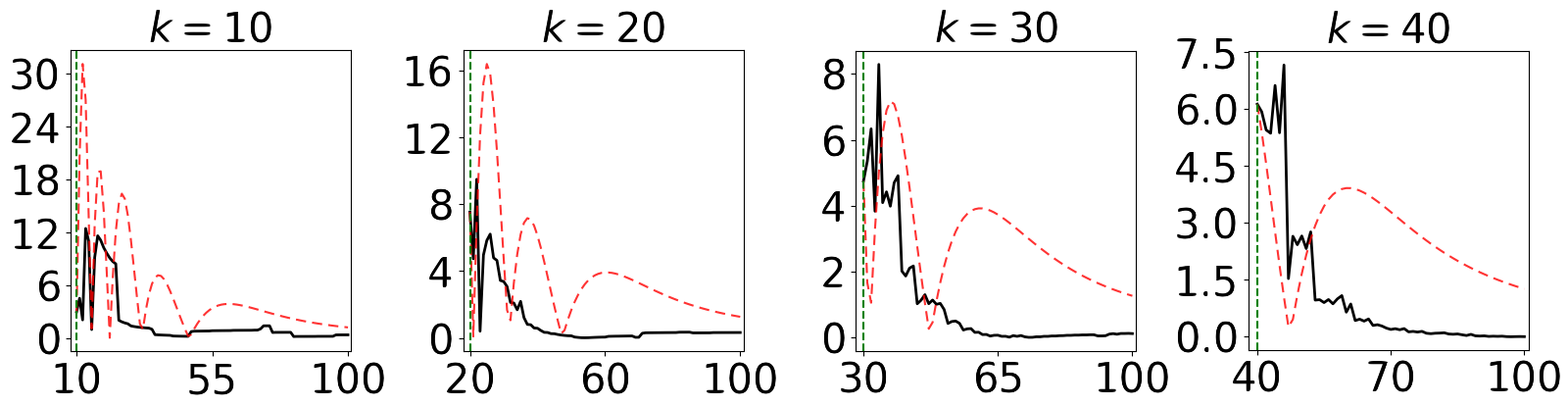}
\caption{Relative error for approximating $f(x)=\sin(\frac{1}{\sqrt{x}})$ on $[0.1,1]$ as a function of $N$ for different $k$; (red): uniform method; (black): refined; (green): $\{N=k\}$.}
\label{fig:Error vs nb trapez_2}
\end{center}
\end{figure}

\subsubsection{Example 3:}\label{sec:example_3}
The last function considered in these experiments is defined as follows on $[0,2]$. It describes a sort of shark fin. 
\begin{align}
f(x)=\left\{\begin{array}{lcl}
-0.1+\sqrt{1.22-(x-1.1)^2} &\text{ if }& 0 \leq x <1, \\[5pt]
1.1-\sqrt{1.22-(x-2.1)^2} & \text{ if }& 1 \leq x \leq 2. \label{eq:shark}
\end{array}\right.
\end{align}

As illustrated in Fig.\,\ref{fig:function_3}, this function has been chosen to emphasize the huge difference between the two methods in terms of density of quadrature points along the domain. Whereas our  quadrature rule concentrates most of the points in regions with  steep variations, the uniform method makes no distinction and wastes part of the budget on easily predictable areas at the expense of a higher approximation error in the more complex regions.

\begin{figure}[t]
\centering{
\includegraphics[width=0.46\textwidth]{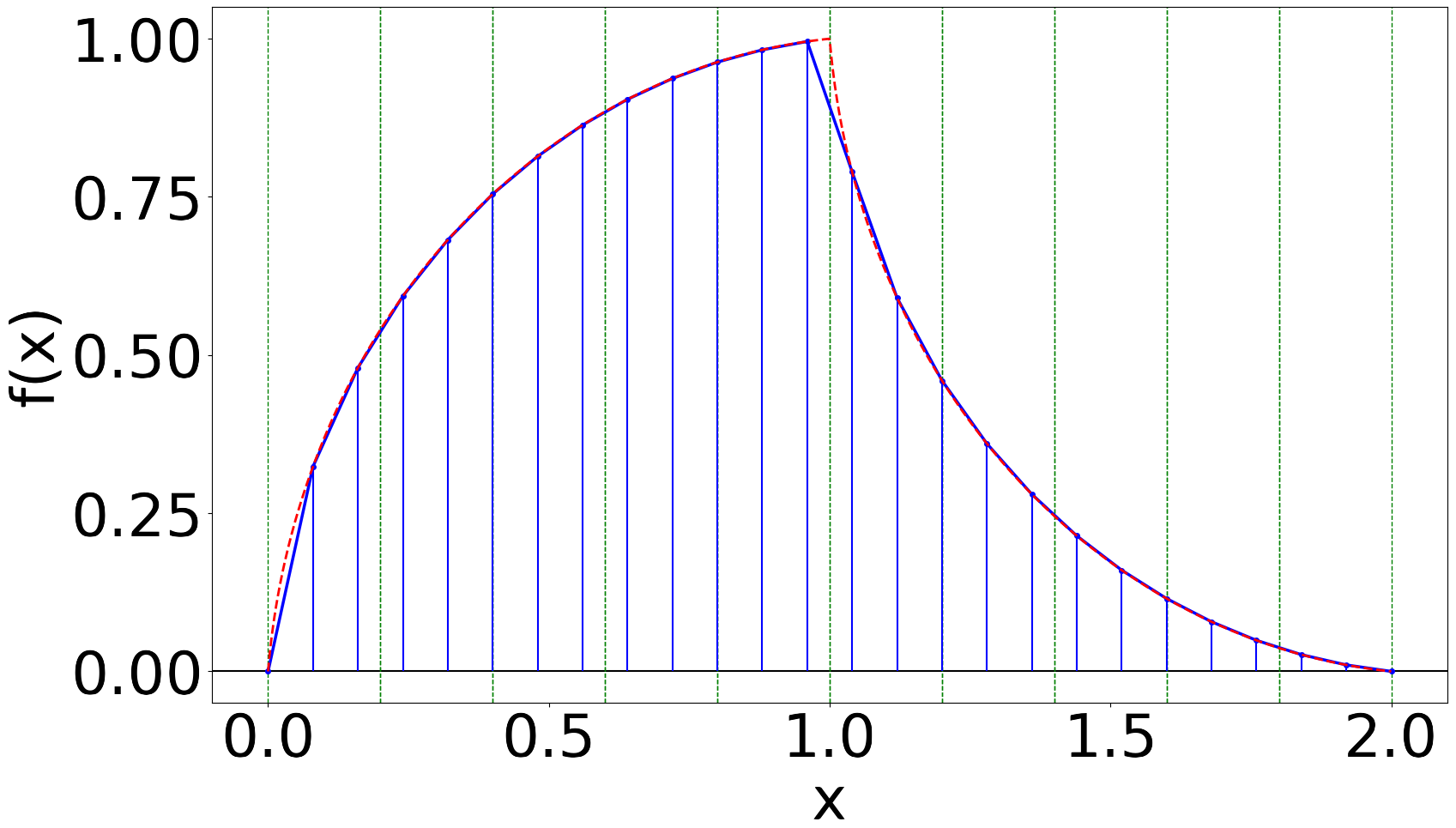}
\quad \quad \includegraphics[width=0.4\textwidth]{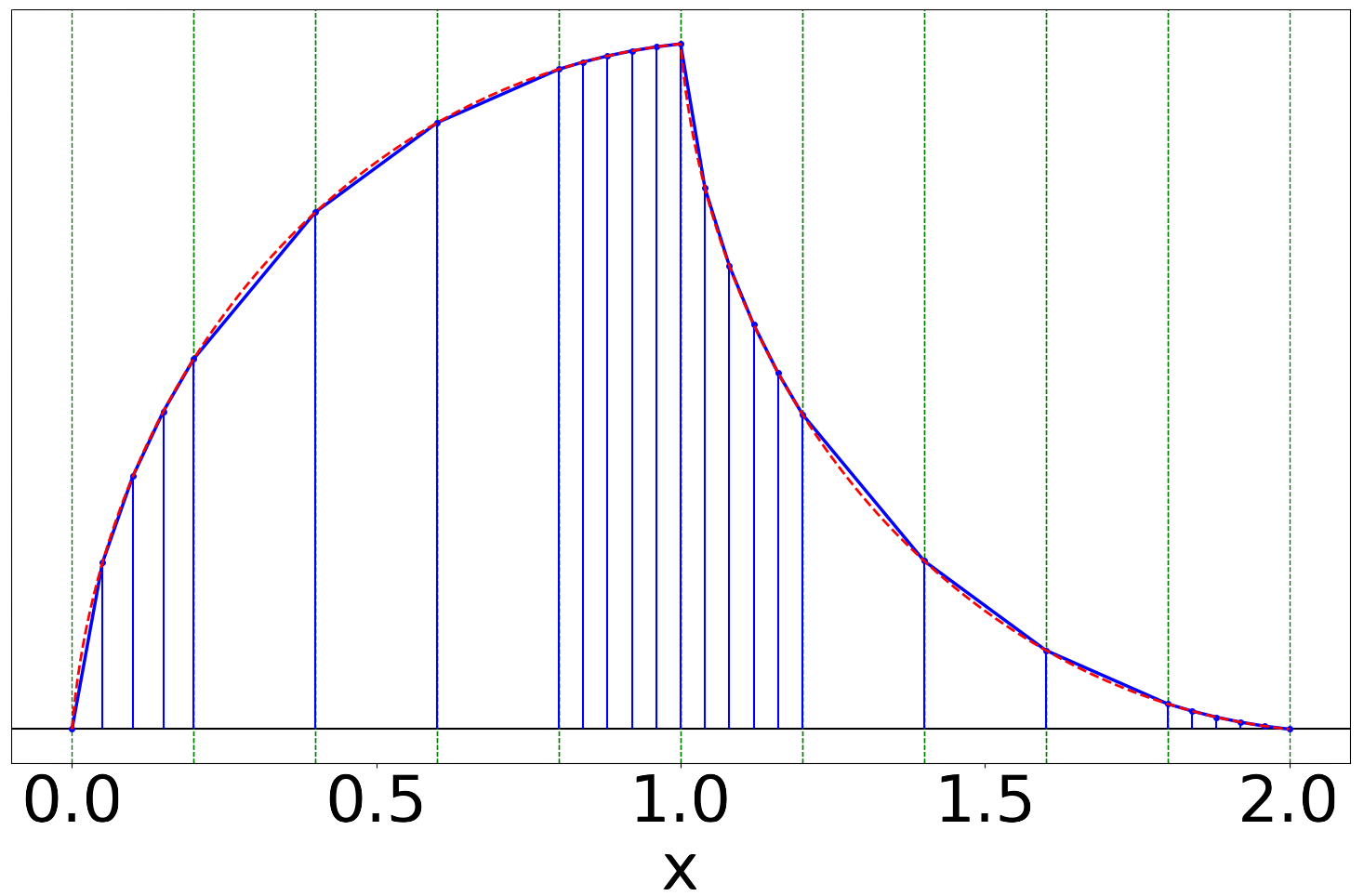}
\caption{Plots of function \eqref{eq:shark} (in red) and its trapezoid approximations (in blue) with $N=25$; (left): uniform method; (right): refined method with $k=10$. The relative quadrature errors are respectively $0.59 \%$ and $0.049 \%$.}
\label{fig:function_3}}
\end{figure}

Fig.\,\ref{fig:Error vs nb trapez_3} confirms the benefit of our method (black line) in terms of relative error for different values of $k$. In particular, we notice oscillations with the uniform method based on the parity of $N$. This is because when $N$ is odd, the top vertex of the curve is not a vertex of a trapezoid, so the precision decreases. 

\begin{figure}[h]
\includegraphics[width=1\textwidth]{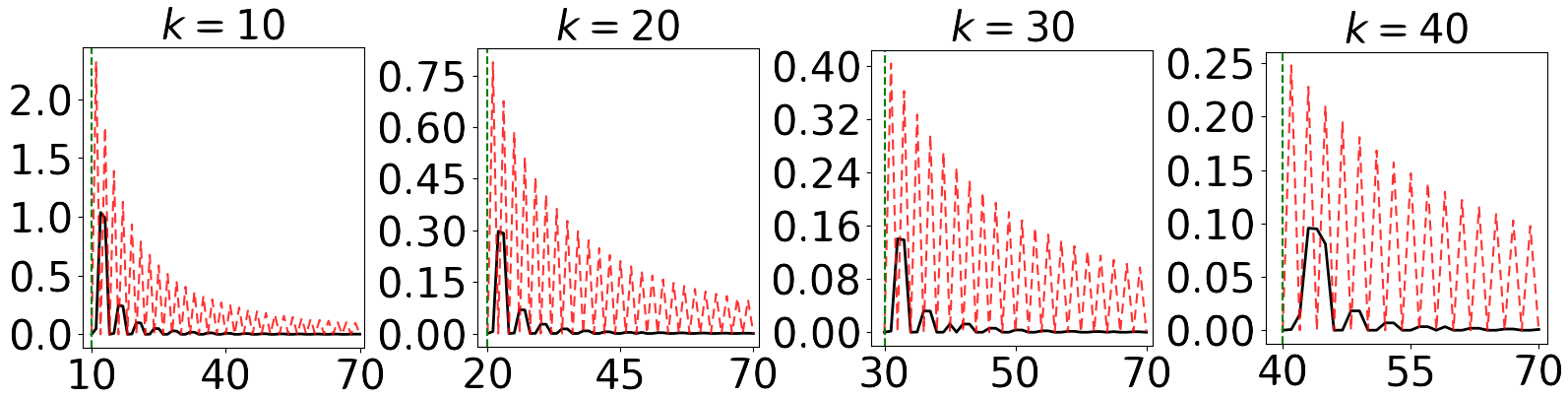}
\caption{Relative error for approximating Eq.~\eqref{eq:shark} on $[0,2]$ as a function of $N$ for different values of $k$; (red): uniform method; (black): refined; (green): $\{N=k\}$.}
\label{fig:Error vs nb trapez_3}
\end{figure}

\section{Adaptive Sampling Methods for PINNs} \label{sec:PINN}

The previous theoretical and experimental results highlighted the importance of using the second-order derivative of $f$ in a quadrature rule for better approximating its integral. In the context of PINNs, 
where $f$ is the integrand of the loss function made of the PDE residuals (i.e., the integrand of ${\mathcal L}(\theta,\mathbf{x})$), our results  state that using information from the Hessian of the residuals for sampling collocation points is better than uniformly sampling them. 
As mentioned in the related work section, while leveraging the gradient of the residuals has been recently used in a couple of papers (see, e.g., \cite{subramanian2023,visser2024,gPINN2021}), as far as we know, resorting to the Hessian has not been investigated yet. Even though we formally proved that using $f''$ to define quadrature points is better than selecting evenly spaced points, we do not yet know how such a strategy behaves in PINNs when compared to sampling methods that leverage the magnitude or the gradient of the residuals. This is the goal of this section, which aims to gain a comprehensive grasp of the capabilities and limitation of $f''$ on different PDEs. 

\subsection{Generic Algorithm STAR-RAD ($\bigstar$-RAD)}
To allow a fair comparison, we use a RAD-like framework  where the $N$ collocation points are sampled according to a probability density function proportional to {\it a criterion of interest}. The latter can be the {\bf PDE residuals} as used in RAD \cite{Chenxi2022}, the {\bf gradient} of the residuals as in \cite{subramanian2023}, the {\bf Hessian} of the residuals for our method, or a {\bf uniform distribution} as used in a standard PINN \cite{raissi2019physics}. In order to use the same setting for this comparison study, we rely on the RAD algorithm and modify it so as to allow different underlying probability density functions. Let us use the following generic distribution:
\begin{eqnarray}
    p(x) & \propto & \frac{\gamma(\mathbf{x})^{\tau}}{\mathbb{E}[\gamma(\mathbf{x})^{\tau}]}+c, \label{eq:PDF}
\end{eqnarray}
where $\tau$ and $c$ are hyperparameters. This formulation is interesting because the SOTA sampling methods can be viewed as special cases of Eq.\eqref{eq:PDF}. Let us consider  them as instantiations of what we call in the following $\bigstar$-RAD, where  res-RAD, grad-RAD, hessian-RAD, and unif-RAD correspond respectively to the residual-based (i.e., where $\gamma(\mathbf{x})=f(\mathbf{x})$), gradient-based ($\gamma(\mathbf{x})=f'(\mathbf{x})$), Hessian-based ($\gamma(\mathbf{x})=f''(\mathbf{x})$) and uniform-based sampling method (standard PINN obtained with $\tau=0$ and $c \rightarrow \infty$). The pseudo-code of $\bigstar$-RAD is presented in Algorithm~\ref{algo:*RAD}.

\begin{algorithm}[t]
\caption{$\bigstar$-RAD} \label{algo:*RAD}
\begin{algorithmic}[1]
\State Set $\bigstar$ $\in \{ ``\text{res}", ``\text{grad}", ``\text{hessian}", ``\text{unif}"\}$, $\tau$, $c$, $N$ and $\#epochs$.
\State Sample a set $S$ of initial collocation points randomly.
\State Train a PINN for a certain number of epochs.
\While{$\#epochs$ not reached}
    \State Build distribution $p(x)$ of Eq.\eqref{eq:PDF} given $\bigstar$ from a set of random points.
    \State $S \leftarrow$ New set of $N$ collocation points sampled according to $p(x)$.
    \State Train a PINN for a certain number of epochs.
\EndWhile
\end{algorithmic}
\end{algorithm}

\subsection{Experimental Results}
Here we perform experiments for three different PDEs, namely the 1D Newton's law of cooling, 1D Brinkman-Forchheimer equation, and 2D Poisson's equation. The analysis is mainly made in terms of convergence speed of the methods, keeping in mind that faster convergence can also be interpreted as a lower need for collocation points to achieve the same performance after a certain number of epochs. The experiments\footnote{The code is available in our \href{https://github.com/Antoine-ml-code/Adaptive-Sampling-for-Collocation-Points-in-PINNs-ECML-2025.git}{GitHub repository.}} have been conducted using ADAM optimizer on a Apple M1 Pro chip with 16Go RAM.

\subsubsection{Newton's law of cooling equation:}
Newton's law of cooling  describes the rate of heat loss of a body as follows:
$\frac{d T}{d t}=R(T_{env}-T(t)),$ 
with $t \in [0, 1000]$, $T_{env}=25$, $T(0)=100$, and $R=0.005$ (coefficient of heat transfer). The analytical solution given by $T(t)=T_{env}+(T(0)-T_{env})e^{-Rt}$ states that this rate is proportional to the difference in the temperatures between the body and its environment. 
We learn a  PINN with a RELU activation function composed of 4 hidden layers with 100 neurons followed by a fully connected layer. The number of epochs $\#epochs=30000$, the learning rate $\eta=10^{-5}$, the number of collocation points $N=40$ drawn according to $p(x)$ (Eq.\eqref{eq:PDF}) approximated from $4000$ candidates, $\tau=1/2$  and $c=0$. We resample every 1000 epochs.

\begin{figure}[t]
{\centering
\includegraphics[width=0.42\textwidth]{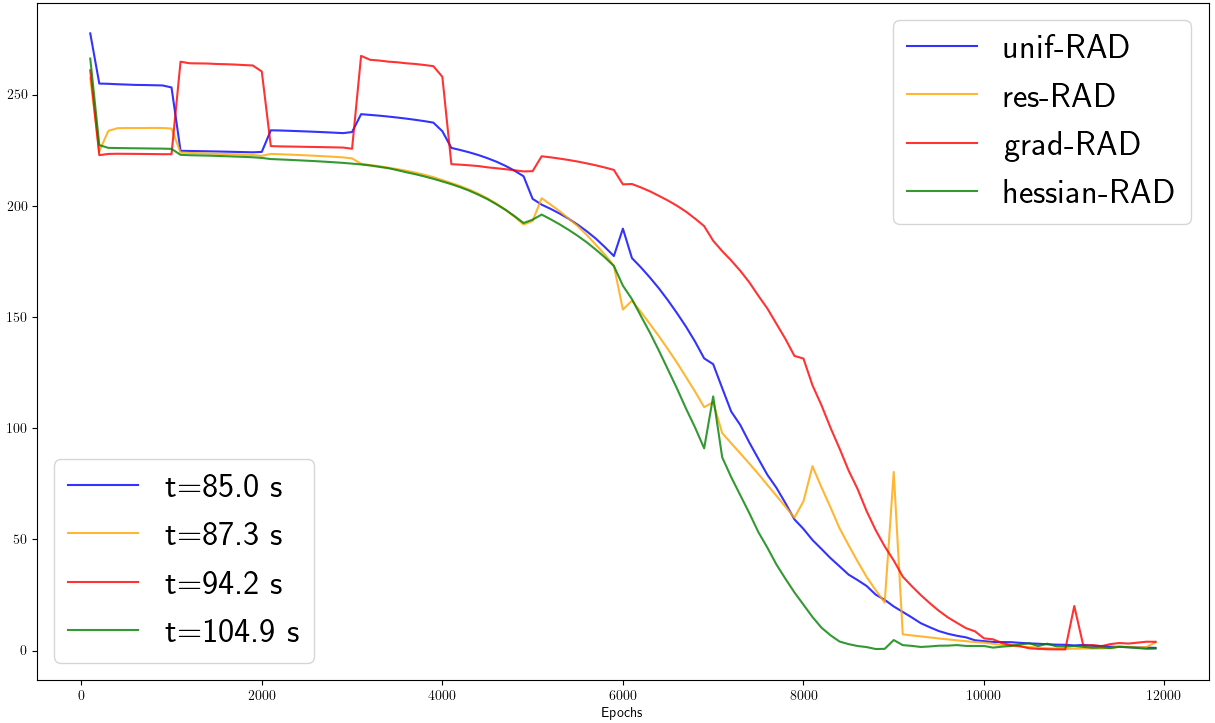}
\includegraphics[width=0.42\textwidth]{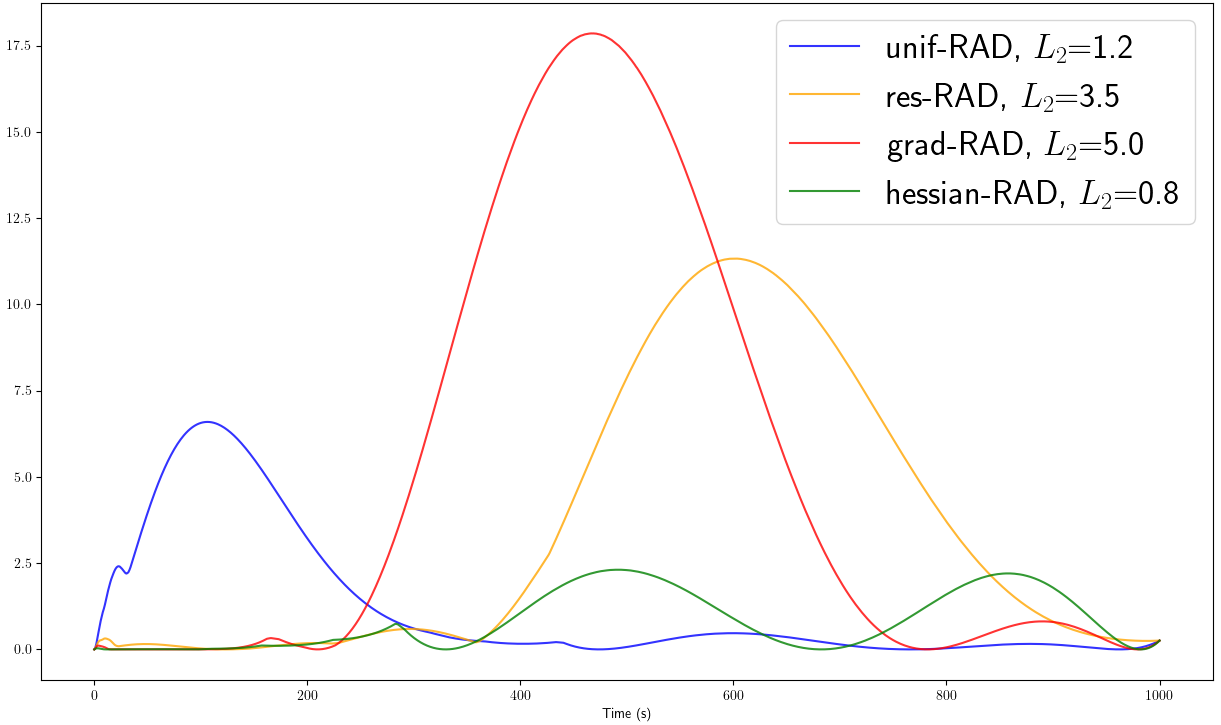}
\caption{(Left) $L_2$-test error along the first 12000 training epochs, as well as the total computational time (after the total 30000 epochs), for various sampling methods on Newton's law of cooling; (Right)  Squared prediction errors along the  domain $[0, 1000]$ after 12000 epochs as well as the total $L_2$-error.}
\label{fig:newton}}
\end{figure}

Fig.\,\ref{fig:newton} (left) reports the $L_2$-test error (i.e., $(T(x)-u_{\theta}(x))^2$) computed along the first 12000 training epochs (i.e., until convergence is reached for all methods) from an equispaced uniform grid composed of 1000 test points. We can see that even though the four competing methods successfully learn the neural solver, benefiting from the second-order derivative (green curve) allows to converge faster. To illustrate the gain in terms of prediction errors over the entire domain, Fig.\,\ref{fig:newton} (right) describes the error suffered by $u_{\theta}(x)$ after 12000 epochs (the behaviors of the 4 methods do not change afterwards). We can see that our hessian-RAD gives a  better approximation of the  solution without suffering from a too large computational burden. 

\subsubsection{Brinkman-Forchheimer:}
The Brinkman–Forchheimer model is a extended Darcy's law and is used to describe wall-bounded porous media flows: 
$$\displaystyle -\frac{\nu_e}{\epsilon}\frac{d^2u}{d \mathbf{x}^2}+\frac{\nu}{K} u(\mathbf{x})=g,$$
with $\mathbf{x} \in [0,H]$, $\nu_e=\nu=10^{-3}$, $\epsilon=0.4$, $K=10^{-3}$, $g=1$, and $H=1$. The analytical solution is $u(x)=\displaystyle \frac{gK}{\nu}\left(1-\frac{\on{cosh}(r(x-\frac{H}{2})}{\on{cosh}(\frac{rH}{2})} \right)$ with $r=\displaystyle \sqrt{\frac{\nu \epsilon}{\nu_e K}}$ and is 
depicted in Fig.\,\ref{fig:Brinkman} (right, black curve). $u$ represents the fluid velocity, $g$ denotes the external force, $\nu$ is the kinetic viscosity of the fluid, $\epsilon$ is the
porosity of the porous medium, and $K$ is the permeability. The effective viscosity $\nu_e$ is related to the pore structure. A no-slip boundary condition is imposed, i.e., $u(0)=u(1)=0$. We learn a PINN with the tanh activation function composed of 3 hidden layers with 20 neurons followed by a fully connected layer. We used the following parameters: $\#epochs=30000$, $\eta=10^{-3}$, $N=30$, $\tau=1/2$  and $c=0$. We resample every 1000 epochs. 

\begin{figure}[t]
{\centering
\includegraphics[width=0.42\textwidth]{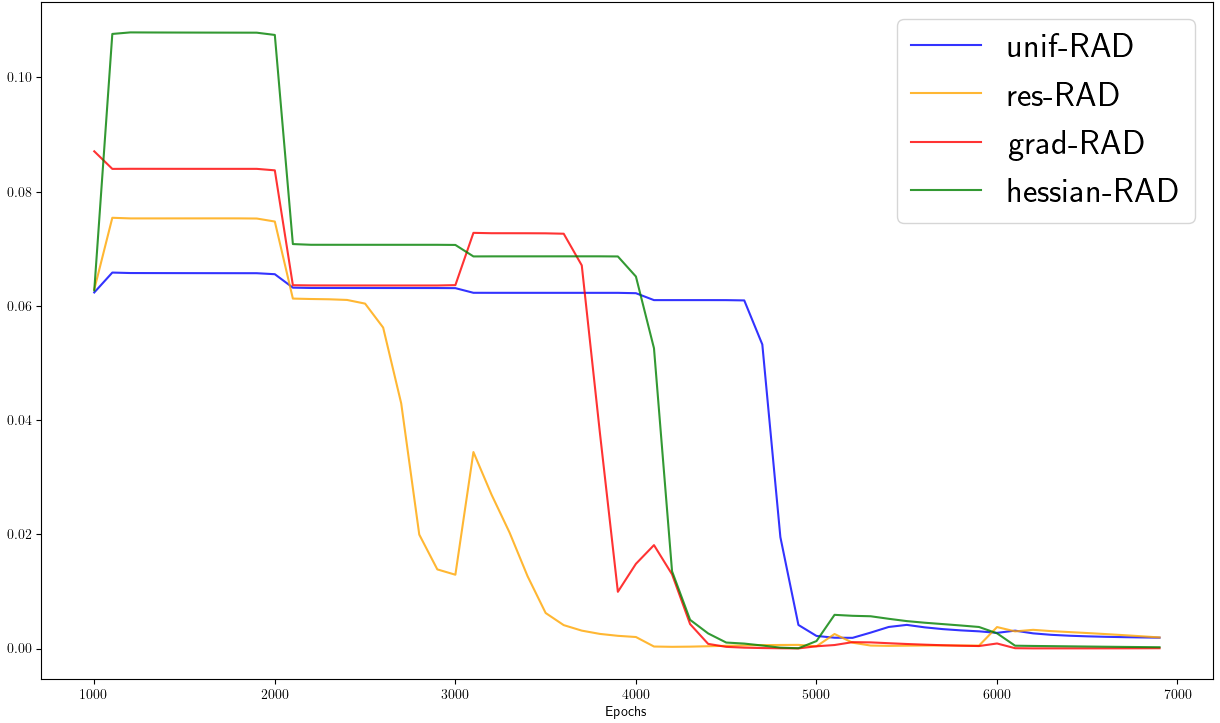}
\includegraphics[width=0.42\textwidth]{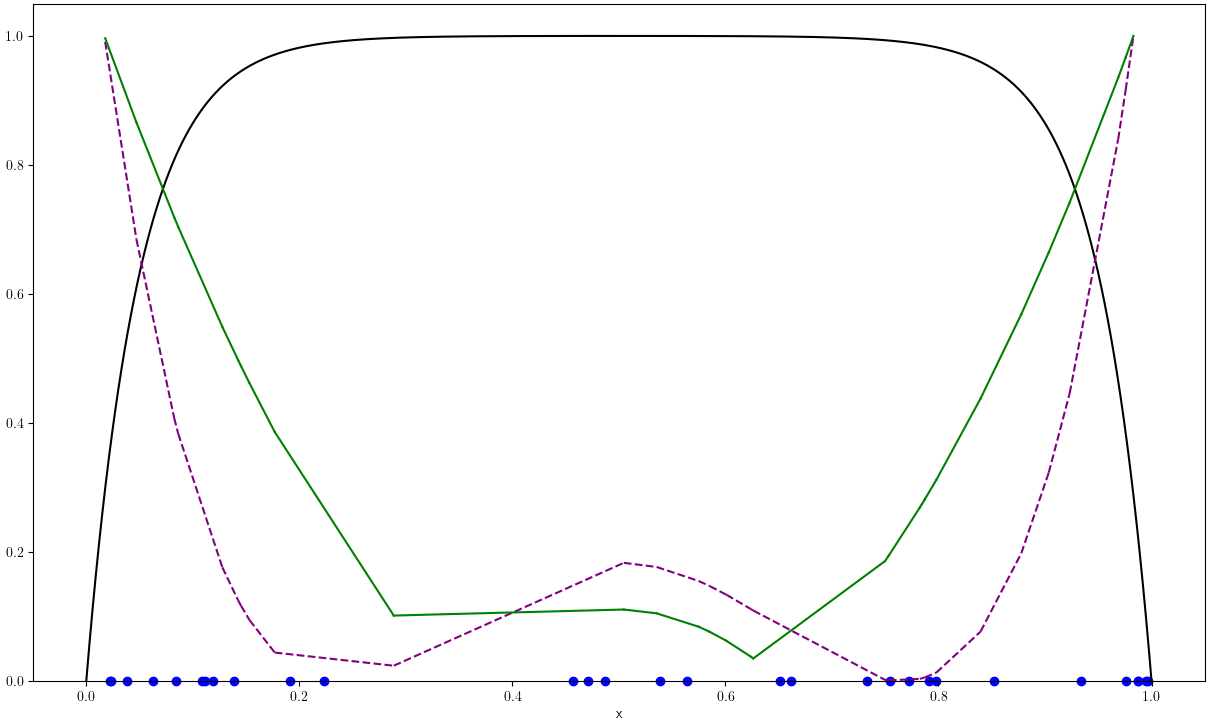}
\caption{(Left) Comparison of the $L_2$-test errors between iterations 1000 and 7000 on Brinkman-Forchheimer; (Right): Analytical solution of the PDE (black), normalized loss (purple dashed) and $|f''|$ (green) after 3000 epochs, and collocation points (in blue) generated by hessian-RAD after 3000 epochs.}
\label{fig:Brinkman}}
\end{figure}

Fig.\,\ref{fig:Brinkman} (left) reports the $L_2$-test error computed along the first 7000 training epochs before convergence of the 4 competing methods. If we can observe that the three adaptive methods (using $f$, $f'$ and $f''$) are better than a standard uniform sampling-based PINN (blue line), this figure also states that the convergence of derivative-based methods (both $f'$ and $f''$) is a bit slower than a residual-based sampling. The reason for this phenomenon comes from the shape of the function which, apart the initial and final steep changes, presents a large plateau. To analyze the impact of the latter, we plot on Fig.\,\ref{fig:Brinkman} (right) the residuals (dashed purple line) as well as $|f''|$ (green line) after 3000 epochs (illustrating a situation where $f$ is much better than $f''$). As expected, as $f''$ does not vary much between $0.3$ and $0.6$, hessian-RAD places only a few  collocation points along this interval, keeping most of the budget where it varies the most. Consequently, the resulting PINN makes errors in this region that do not affect the empirical loss too much, but leading to a poor behavior at test time. The same interpretation can be provided for grad-RAD, both methods requiring more iterations to converge. Nevertheless, note  that hessian-RAD reaches eventually the best prediction. 

\subsubsection{2D Poisson's PDE:} Poisson's equation is an elliptic PDE used in theoretical physics. It involves second derivatives of $u(x,y)$ and is given by $
\Delta u=F(x,y),$ where $(x,y) \in [0,1]^2$. We take $F$ such that $u(x,y)=2^{4a}x^a(1-x)^ay^a(1-y)^a$ with $a=10$ is the analytical solution (depicted in Fig.\,\ref{fig:Poisson} (top left)).
We learn a PINN with the tanh activation function composed of 3 hidden layers with 20 neurons followed by a fully connected layer. We used the following parameters: $\#epochs=20000$, $\eta=10^{-3}$, $N=400$ drawn according to $p(x)$  approximated from $40000$ candidates, $\tau=1/2$  and $c=0$. We resample every 1000 epochs. 
\begin{figure}[t]
\includegraphics[width=\textwidth]{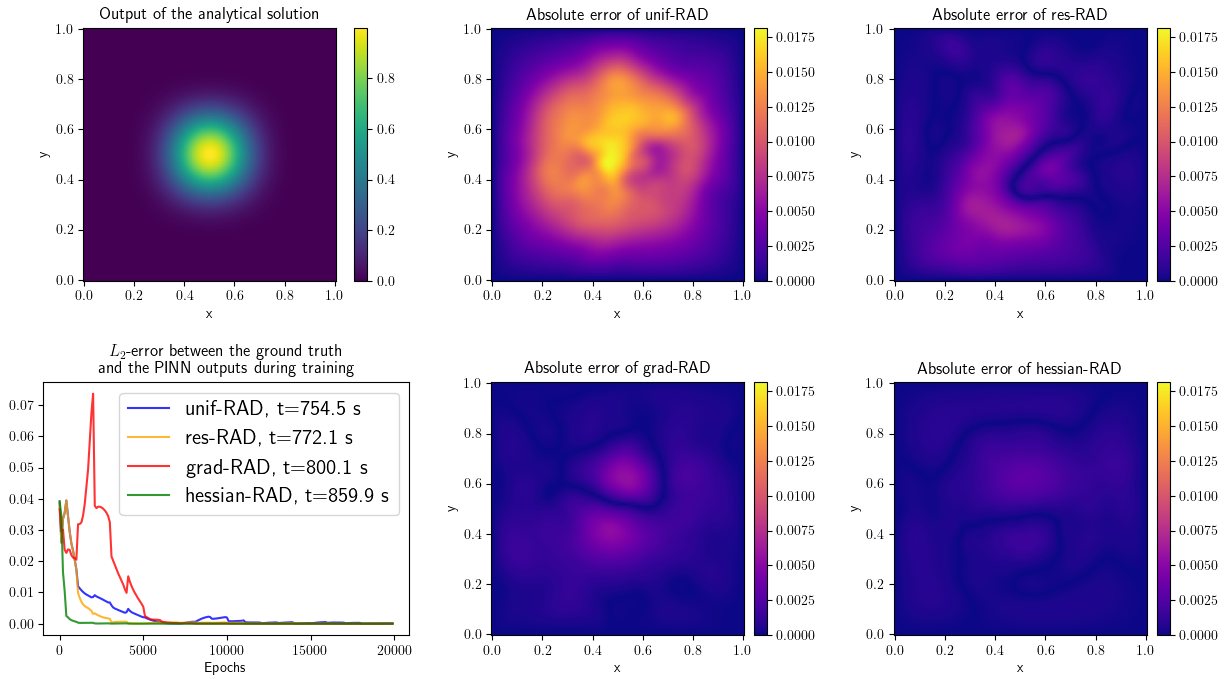}
\caption{(Top left) Analytical solution of Poisson's PDE; (Bottom Left) $L_2$-test error along the first 20000 training epochs, with computational time; (Right) Heatmaps of errors of the 4 sampling methods after 20000 epochs.}
\label{fig:Poisson}
\end{figure}
The most striking comment we can make from Fig.\,\ref{fig:Poisson} (bottom left) is that hessian-RAD fully takes advantage of the abrupt variations of the Poisson solution to converge much faster than the others. About 1000 epochs are sufficient for learning  the problem while the competing strategies require much more iterations to stabilize.  Interestingly, even after 20000 epochs when the methods seem to have converged to an exact solution, the gap in terms of prediction error in favor of our method is important, as illustrated with the four heat maps of Fig.\,\ref{fig:Poisson} (right part). Again, even though computing the Hessian is more costly, the additional burden is reasonable and compensated by a better prediction.


\section{Conclusion and Perspectives}
We have presented a provably accurate quadrature method based on second-order derivatives, which performs very well for estimating the integral of a function $f$. Exploiting the Hessian of the residuals shows also promising results when used in a sampling method for PINNs. The observations made from the Brinkman-Forchheimer PDE give raise to a future possible direction consisting in sampling the collocation points according to different distributions. This has already been done in \cite{subramanian2023}, but only by combining uniformly sampled points with others drawn according to residual information (magnitude or gradient). The results obtained on this PDE rather suggest that a combination of $f$ with $f'$ and/or $f''$ in complicated regions would be relevant. Identifying automatically these challenging parts of the domain is an open question. One could consider a hybrid approach in which the zones where the derivative is below some threshold are decomposed into grids. The size of such a grid could be proportional to the size of the zone and the total number of collocation points. Elsewhere, the sampling would follow the values of the derivative. This would ensure that such areas are not left out during the training process and the $L_2$-test error might go down more rapidly. 
On the other hand, while using the Hessian has been shown to be beneficial in a PINN training, it can become costly in high dimensions. 
A first mitigation attempt would be a simple stochastic approach, where at each resampling iteration, the entries of the Hessian to be computed are sampled.
Another approach is to build on the fact that methods like gPINN are beneficial and already do a big part of the computations necessary for the Hessian, and thus these can be combined almost for free.
Indeed, these methods compute the gradient (w.r.t. the parameters) of the gradient (w.r.t. spatio-temporal dimensions).
In a deep network, this gradient of gradient already needs to backpropagate almost back to the input and thus computing the Hessian is almost free in such a case. This would also allow having a resampling step that is executed more often, possibly at every iteration, by computing the empirical max of $f''$ (using the current collocation points) in cells and resampling points using this information. Finally, note that the reasoning in Sec.~\ref{sec:uniform} is based on the Lagrange remainder theorem, which itself uses the fact that $p(x)$ is a polynomial in $x$ of degree at most $1$, hence $p''(x)=0$. Consider $f(x,y)$ of two variables in a square $[x_1,x_2]\times [y_1,y_2]$. The natural analogue of $p$ would be a function $P(x,y)=a_0+a_1x+a_2y+a_3xy$ linear in both $x$ and $y$ such that $f=P$ on the corners of the square. But if $a_3 \neq 0$, then $P(x,y)$ has non-zero second order terms, so the reasoning cannot be extended. Moreover, if one were to push the computations further, the error term becomes too complicated to manipulate like in the proof of Th.~\ref{thm:tight}. Another approach is hence needed in order to generalize to higher dimensions.

\medbreak

\noindent \textbf{Acknowledgments.}
This work has been funded by a public grant from the French National Research Agency under the “France 2030” investment plan, which has the reference EUR MANUTECH SLEIGHT - ANR-17-EURE-0026.


\delete{
\medbreak

\noindent \textbf{Disclosure of Interests.} The authors have no competing interests to declare that are relevant to the content of this article.

\paragraph{Impact Statement}
This essentially fundamental and theoretical paper aims at advancing the field of PiML for PINNs. Our work may have many societal or ethical implications, but we feel that none need be specifically highlighted here.
}

\end{document}